\def\figref#1{Figure~\ref{#1}}
\def\Figref#1{Figure~\ref{#1}}
\def\secref#1{section~\ref{#1}}
\def\Secref#1{Section~\ref{#1}}
\def\Algref#1{Algorithm~\ref{#1}}
\def\1{\bm{1}}
\def\rt{{\textnormal{t}}}
\def\rvn{{\mathbf{n}}}
\def\rvx{{\mathbf{x}}}
\def\rvy{{\mathbf{y}}}
\def\rvz{{\mathbf{z}}}
\def\rvzeta{{\mathbf{\zeta}}}
\def\vn{{\bm{n}}}
\def\vr{{\bm{r}}}
\def\vw{{\bm{w}}}
\def\vx{{\bm{x}}}
\def\vy{{\bm{y}}}
\def\vz{{\bm{z}}}
\DeclareMathAlphabet{\mathsfit}{\encodingdefault}{\sfdefault}{m}{sl}
\SetMathAlphabet{\mathsfit}{bold}{\encodingdefault}{\sfdefault}{bx}{n}
\newcommand{\E}{\mathbb{E}}
\newcommand{\R}{\mathbb{R}}
\newcommand{\N}{\mathbb{N}}
\renewcommand{\P}{\mathbb{P}}
\newcommand{\C}{\mathcal{C}}
\renewcommand{\L}{\mathrm{L}}
\newcommand{\X}{\mathcal{X}}
\newcommand{\Z}{\mathcal{Z}}
\newcommand{\T}{\mathcal{T}}
\newcommand{\id}{\mathrm{Id}}
\newcommand{\norm}[1]{\left\Vert#1\right\Vert}
\newcommand{\scal}[2]{\left\langle#1,#2\right\rangle}
\renewcommand{\vec}[1]{\mathbf{#1}}
\newcommand{\diameter}{\mathrm{diam}}
\newcommand{\prox}{\mathrm{prox}}
\newcommand{\proj}{\mathrm{proj}}
\newcommand{\tmin}{t_\mathrm{min}}
\newcommand{\tmax}{t_\mathrm{max}}
\def\hatt{{\widehat{t}}}
\newcommand{\tminh}{\hatt_\mathrm{min}}
\newcommand{\tmaxh}{\hatt_\mathrm{max}}
\newcommand{\dist}[1]{\mathcal{P}_{#1}}
\newcommand{\pdf}[1]{p_{#1}}
\theoremstyle{plain}
\newtheorem{theorem}{Theorem}[section]
\newtheorem{corollary}[theorem]{Corollary}
\theoremstyle{definition}
\newtheorem{definition}[theorem]{Definition}
\theoremstyle{remark}
\icmltitlerunning{Learning Gradually Non-convex Image Priors Using Score Matching}
\begin{document}

\twocolumn[
\icmltitle{Learning Gradually Non-convex Image Priors Using Score Matching}




\begin{icmlauthorlist}
\icmlauthor{Erich Kobler}{x}
\icmlauthor{Thomas Pock}{y}
\end{icmlauthorlist}

\icmlaffiliation{x}{Department of Neuroradiology, University Hospital Bonn, Bonn, Germany}
\icmlaffiliation{y}{Institue of Computer Graphics and Vision, Graz University of Technology, Graz, Austria}

\icmlcorrespondingauthor{Erich Kobler}{erich.kobler@ukbonn.de}

\icmlkeywords{Machine Learning, ICML}

\vskip 0.3in
]



\printAffiliationsAndNotice{}  

\begin{abstract}
In this paper, we propose a unified framework of denoising score-based models in the context of graduated non-convex energy minimization.
We show that for sufficiently large noise variance, the associated negative log density -- the energy -- becomes convex. Consequently, denoising score-based models essentially follow a graduated non-convexity heuristic.
We apply this framework to learning generalized Fields of Experts image priors that approximate the joint density of noisy images and their associated variances.
These priors can be easily incorporated into existing optimization algorithms for solving inverse problems and naturally implement a fast and robust graduated non-convexity mechanism.
\end{abstract}

\section{Introduction}


Score matching (SM,~\citealp{Hy05}) has recently seen a renewed interest in computer vision and machine learning as it allows to fit a high-dimensional parametric distribution to a given data distribution while avoiding computing the often intractable normalization constant. The basic idea is to match (in a log domain) the gradients of the parametric distribution with the gradients of the data distribution, by minimizing a least squares loss function. Interestingly, the problem of computing the gradients of the log-data distribution can be avoided by implicit SM, which results in a loss function merely depending on the gradient and Laplacian of the parametric model.

In subsequent work, \citet{Vi11} showed equivalence of these SM techniques to denoising autoencoders by introducing denoising SM, where the gradients of a slightly smoothed data distributions are computed based on Tweedie's formula~\cite{Ef11}.
\citet{SoEr19,SoEr20} introduced noise conditional score networks~(NCSNs) by conditioning a score predicting network on the noise level.

In parallel, diffusion probabilistic models~\cite{SoJa15} motivated by nonequilibrium thermodynamics evolved, which are learned to revert a diffusion process -- a Markov chain transforming a data distribution to a prior distribution by gradually adding noise.
Later, \citet{HoJa20} introduced denoising diffusion probabilistic models (DDPM) that explicitly connect diffusion probabilistic models to denoising SM in the sense that the noise level is encoded by a schedule in the diffusion time steps.
Their image-generation results were remarkable and ignited further research in the field of score-based generative models~\cite{RoBl22,DiSa22,YaZh22,HoCh22}.
Still, the underlying learning technique of all the previously introduced score-based generative models remains denoising SM.

Typically, the transition from the data distribution to the prior distribution is discretized~\citep{HoJa20,SoEr19,SoEr20}.
During inference samples are generated by traversing the reverted discrete process by means of a stochastic sampling heuristic.
In contrast, numerical stochastic differential equation (SDE) solvers can be used for sampling if the score models are learned to continuously depend on diffusion time~\citep{SoSo21}.
These solvers can further be used to solve inverse problems by considering a stochastic gradient flow of the associated posterior distribution~\citep[Eq. (14)]{SoSo21}.
Moreover, this gradient flow perspective and its approximation by a score network lead to denoising diffusion implicit models~(DDIM)~\citep{SoMe22}, which allow for a significant speed up in sampling while maintaining image quality to a large extent.

In contrast to the main body of recent works that focus on the score -- the
gradient of the log density, we focus in this work on the respective
negative log density -- the energy. We consider the problem of
learning energies of corresponding image priors on gradually noisier images, where the variance~$t$ of the noise is equal to a smoothing parameter.
Indeed adding noise to images is equivalent to sampling images from the true image distribution \emph{smoothed} by a Gaussian with variance~$t$.
Moreover, we show that under mild assumptions, there always exists a
sufficiently large smoothing parameter $\widetilde{t}$ for which the
corresponding negative log density (the energy) becomes a convex function.
Thus, denoising score-based generative models learn to approximate gradients of energies that become gradually more \emph{convex} for
increasing noise and recent inference techniques such as DDIM~\citep{SoMe22} follow the graduated non-convexity~(GNC) principle~\citep{BlZi87}, which is a widely used heuristic for avoiding local minima.

Inspired by these observations, we propose a unified energy-based
perspective of denoising score-based models through learning a
one-parameter family of energies continuously parametrized by~$t$ to approximate the smoothed negative log density of the data.
This enables easy integration of the learned prior energy into the variational approach to inverse problems by choosing sequences for~$t$, i.e., smoothing schedules.
In particular, we compare the following options:
\begin{enumerate}
  \item A joint minimization in both the image and the smoothing
    parameter.
  \item A predefined schedule such that the subsequent inference
    becomes equivalent to GNC.
  \item A task-specific schedule learned by unrolling a certain
    number of proximal gradient steps.
\end{enumerate}
Interestingly, the last option allows drawing connections to
variational networks~\cite{ChPo16,KoKl17}.

\section{A Graduated Non-convexity View of Score-based Generative Models} \label{sec:gnc}
Let $(\X,\mathfrak{F},\P)$ be a complete probability space on a compact set~$\X\subset\R^d$ with sigma algebra~$\mathfrak{F}$ and probability measure~$\P$.
We further assume an absolutely continuous probability measure with corresponding probability density function~$\pdf{\rvx}\in \C_0(\X,[0,\infty))$.
Later, we will also consider a discrete probability measure defined by the empirical distribution of a dataset~$\{\vx_i\}_{i=1}^n\subset\X$ with cardinality~$n\in\N$.
In addition, we consider smoothing parameters~$t\in\T=[\tmin,\tmax]$, where $0<\tmin<\tmax<\infty$.

In this setting, we show that score-based generative models~\citep{SoEr19,HoJa20} actually learn a graduated non-convexity (GNC) scheme~\citep{BlZi87}.
In detail, score-based generative models train a neural network to approximate the conditional score, i.e., the gradient of the log density of the data degraded by additive Gaussian noise~$s(\vy,t)\approx\nabla_\vy\log\pdf{\rvy\vert\rt}(\vy,t)$ where $\rvy=\rvx+\sqrt{\rt}\rvn$, $\rvn\sim\mathcal{N}(0,\id)$.
The key observation is that although the data becomes noisier with increasing~$t$, the corresponding probability density
\[
\pdf{\rvy\vert\rt}(\vy\vert t)=\left(\pdf{\rvx}\ast G(\vec{0},t\id)\right)(\vy)
\]
gets smoother due to the convolution with a Gaussian density~$G(\vec{0},t\id)$ and its associated energy~$F(\vy,t)=-\log\pdf{\rvy\vert\rt}(\vy,t)$ gets more \emph{convex} (in fact more quadratic), see the left plot in Figure~\ref{fig:gmmOptimization}.
Next, we prove that there indeed exists a lower bound on the variance~$\widetilde{t}$ such that the energy of the conditional density is convex for~$t\geq\widetilde{t}$.

Let $G(\bm{\mu},\Sigma)$ denote a multivariate Gaussian probability density with mean~$\bm{\mu}\in\R^d$ and symmetric positive definite covariance matrix~$\Sigma\in\R^{d\times d}$, which reads as
\[
G(\bm{\mu},\Sigma)(\vx)=\vert 2\pi\Sigma\vert^{-\frac{1}{2}}\exp\left(-\norm{\vx-\bm{\mu}}_{\Sigma^{-1}}^2\right).
\]
\begin{definition}[GMM]
Let $n\in\N$.
A Gaussian mixture model (GMM) consisting of $n$ components is defined as
\[
p = \sum_{i=1}^n w_i G(\vx_i,\Sigma_i)
\]
with means~$\{\vx_i\}_{i=1}^n\subset\R^d$, covariances~$\{\Sigma_i\}_{i=1}^n\subset\R^{d\times d}$, and weights on the unit simplex~$\vw=(w_1\ \ldots\ w_n)^\top\in\Delta^n$.
\end{definition}
Recall that for $n\to\infty$ a GMM can uniformly approximate any function in~$\C_0$~\citep{NgNg20}.
Note that an approximation w.r.t.~$\L^p$ also holds for any~$p_\rvx\in\L^p$ for~$p\in[1.\infty)$.
Consequently, our setting and most practically encountered probability density functions can be well approximated by GMMs.
Thus, we focus in the following on probability density functions induced by GMMs.

\begin{theorem}\label{thm:Fconvex}
Let~$\X\subset\R^d$ and~$\diameter(\X)<\infty$. 
Consider a GMM of the form
\[
p(\vx) = \sum_{i=1}^n w_i G(\vx_i,\Sigma_i)(\vx),
\]
where $(w_1\ \cdots \ w_n)^\top\in\Delta^n$. Assume~$\{\vx_i\}_{i=1}^n\subset\X$.
Then there exists a smoothing parameter~$\widetilde{t}\in(0,\infty)$ such that the smoothed energy
\[
F(\vx,t)\coloneqq -\log \big( (p \ast G(\vec{0}, t\id))(\vx)\big)
\]
is \emph{convex} w.r.t.~$\vx$ for all~$t\geq\widetilde{t}$.
\end{theorem}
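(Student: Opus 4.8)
The plan is to prove convexity by showing $\nabla_\vx^2 F(\vx,t)\succeq 0$ once $t$ is large enough, and the key is that convolving with $G(\vec{0},t\id)$ merely inflates covariances, so $p\ast G(\vec{0},t\id)=\sum_{i=1}^n w_i\,G(\vx_i,\Sigma_i+t\id)$ is again a GMM with the same weights and means and covariances $C_i\coloneqq\Sigma_i+t\id\succeq t\id$. Introducing the responsibilities $\pi_i(\vx)\coloneqq w_i G(\vx_i,C_i)(\vx)\big/\sum_j w_j G(\vx_j,C_j)(\vx)$, which form a probability vector on $\{1,\dots,n\}$ for each $\vx$, a direct differentiation of $F=-\log\sum_i w_i G(\vx_i,C_i)$ gives
\[
\nabla_\vx^2 F(\vx,t)=\E_{i\sim\pi(\vx)}\big[C_i^{-1}\big]-\Cov_{i\sim\pi(\vx)}\big(C_i^{-1}(\vx-\vx_i)\big);
\]
equivalently, by Tweedie's second-order identity, $\nabla_\vx^2 F(\vx,t)=\tfrac1t\id-\tfrac1{t^2}\Cov[\rvx\mid\rvx+\sqrt t\,\rvn=\vx]$ with $\rvx\sim p$. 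So the theorem reduces to bounding this posterior covariance by $t\id$, \emph{uniformly in} $\vx$.

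The clean cases come first (write $\sigma\coloneqq\max_i\lambda_{\max}(\Sigma_i)$). If $p$ is a discrete measure on $\X$ (the $\Sigma_i=\vec{0}$, i.e.\ empirical-distribution, case), then the posterior of $\rvx$ given the noisy observation is again supported in $\X$, so Popoviciu's inequality gives $\Cov[\rvx\mid\cdot]\preceq\tfrac14\diameter(\X)^2\id$ regardless of the observation, and $\widetilde t=\tfrac14\diameter(\X)^2$ suffices. If all $\Sigma_i$ equal a common $\Sigma$, I would use the law of total covariance conditioned on the mixture component: the within-component posterior covariance is $(\Sigma^{-1}+\tfrac1t\id)^{-1}\preceq\Sigma$, while the conditional mean given component $i$ is $(\id-A)\vx_i+A\vx$ with $A=\Sigma(\Sigma+t\id)^{-1}$ and $\norm{\id-A}\le1$, so the between-component part is $\preceq\tfrac14\diameter(\X)^2\id$; together $\Cov[\rvx\mid\cdot]\preceq\big(\sigma+\tfrac14\diameter(\X)^2\big)\id$, so $\widetilde t=\sigma+\tfrac14\diameter(\X)^2$ works. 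In the component form this is just the observation that $\E_\pi[C_i^{-1}]\succeq(t+\sigma)^{-1}\id$ while $\vu_i-\vu_j\coloneqq C_i^{-1}(\vx-\vx_i)-C_j^{-1}(\vx-\vx_j)=C^{-1}(\vx_j-\vx_i)$ is $\vx$-independent, whence $\Cov_\pi(\vu_i)\preceq\tfrac14\big(\max_{i,j}\norm{\vu_i-\vu_j}\big)^2\id\preceq\tfrac1{4t^2}\diameter(\X)^2\id$.

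The step I expect to be the main obstacle is the general anisotropic case, where the $\Sigma_i$ differ. Then $\vu_i-\vu_j=(C_i^{-1}-C_j^{-1})\vx+(C_j^{-1}\vx_j-C_i^{-1}\vx_i)$ carries an $\vx$-dependent term of size $O(\norm{\vx}/t^2)$ (since $\norm{C_i^{-1}-C_j^{-1}}=\norm{C_i^{-1}(\Sigma_j-\Sigma_i)C_j^{-1}}\le 2\sigma/t^2$), so the crude bound on the covariance term grows with $\norm{\vx}$ and the naive estimate fails far from the means. To rescue it I would pair this with concentration of the responsibilities: comparing exponents of the $G(\vx_i,C_i)(\vx)$ shows that $\pi_i(\vx)$ decays like $\exp(-c\norm{\vx}^2/t^2)$ away from the component(s) of largest variance in the direction $\vx/\norm{\vx}$, and $u\mapsto u\,e^{-cu}$ is bounded, which damps the polynomial growth; concretely one splits $\R^d$ into a ball on which the crude estimate suffices and its complement on which concentration dominates, tuning the radius against $\widetilde t$. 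The genuinely delicate point is uniformity along the directions where two inflated covariances have equal variance, since there $\pi$ need not collapse to a single component and one must argue that the relevant differences are correspondingly constrained (automatic when the $\Sigma_i$ are scalar multiples of $\id$, where such ties occur only at $\vx=\vec{0}$). Granting a uniform estimate $\Cov_{i\sim\pi(\vx)}\big(C_i^{-1}(\vx-\vx_i)\big)\preceq M t^{-2}\id$ with $M$ depending only on $\diameter(\X)$, the $\Sigma_i$, and $n$, one obtains $\nabla_\vx^2 F(\cdot,t)\succeq\big((t+\sigma)^{-1}-M t^{-2}\big)\id\succ0$ for all $t\ge\widetilde t$.
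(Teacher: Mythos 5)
Your Hessian identity $\nabla_\vx^2 F(\vx,t)=\E_{i\sim\pi(\vx)}[C_i^{-1}]-\Cov_{i\sim\pi(\vx)}\big(C_i^{-1}(\vx-\vx_i)\big)$ is correct, and it is a genuinely different---and strictly sharper---route than the paper's. The paper discards the rank-one term $f^{-2}\nabla_\vx f(\nabla_\vx f)^\top$ (which is exactly the $\E_\pi[\vu_i]\E_\pi[\vu_i]^\top$ that turns your second moment into a covariance) and then demands that \emph{each} summand $C_i^{-1}-\vu_i\vu_i^\top$ be positive semidefinite separately, which yields $\widetilde t=\diameter(\X)^2$. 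Your use of Popoviciu's inequality and the law of total covariance exploits the mixture structure and buys both a better constant ($\widetilde t=\tfrac14\diameter(\X)^2$ in the Dirac case, $\sigma+\tfrac14\diameter(\X)^2$ for a common covariance, with $\sigma=\max_i\lambda_{\mathrm{max}}(\Sigma_i)$) and convexity on all of $\R^d$ rather than only on $\X$ in those cases.

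The obstacle you flag in the general anisotropic case is real as you have set the problem up, but it is self-inflicted: you are trying to prove more than the theorem, as the paper proves it, asserts. The paper's argument only establishes $\nabla_\vx^2F(\vx,t)\succeq0$ for $\vx\in\X$---its final step bounds $\norm{\vx-\vx_i}$ by $\diameter(\X)$, which requires $\vx\in\X$---consistent with the standing assumption that the probability space lives on the compact set $\X$. If you likewise restrict to $\vx\in\X$, your own decomposition closes the general case immediately, with no concentration argument: $\norm{\vu_i}=\norm{C_i^{-1}(\vx-\vx_i)}\le t^{-1}\diameter(\X)$ gives
\[
\Cov_{i\sim\pi(\vx)}(\vu_i)\preceq\E_{i\sim\pi(\vx)}\left[\vu_i\vu_i^\top\right]\preceq t^{-2}\diameter(\X)^2\,\id,
\qquad
\E_{i\sim\pi(\vx)}\left[C_i^{-1}\right]\succeq(t+\sigma)^{-1}\id,
\]
so $\nabla_\vx^2F(\cdot,t)\succeq0$ on $\X$ whenever $t^2\ge(t+\sigma)\diameter(\X)^2$. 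If you do insist on global convexity over $\R^d$, then the concentration-of-responsibilities step you sketch---including the uniformity issue along directions where two inflated covariances tie---must actually be carried out; as written, ``granting a uniform estimate'' leaves the main case of your stronger claim unproved.
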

\begin{corollary}\label{cor:empirical}
Theorem~\ref{thm:Fconvex} also holds if an empirical discrete probability measure of a dataset~$\{\vx_i\}_{i=1}^n\subset\X$, i.e.
\[
p = \frac{1}{n}\sum_{i=1}^n\delta_{\vx_i},
\]
is considered.
Here, $\delta_{\vx}$ denotes the Dirac delta measure located at~$\vx$.
\end{corollary}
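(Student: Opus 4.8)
The plan is to deduce the corollary from Theorem~\ref{thm:Fconvex} itself, by spending an arbitrarily small portion of the smoothing budget to replace the point masses by honest Gaussian components. First I would fix an arbitrary $t_0\in(0,\infty)$ and form the GMM
\[
q_0 \coloneqq \frac1n\sum_{i=1}^n G(\vx_i,t_0\id),
\]
which has uniform weights in $\Delta^n$, means $\{\vx_i\}_{i=1}^n\subset\X$, and symmetric positive definite covariances $t_0\id$, hence meets every hypothesis of Theorem~\ref{thm:Fconvex}. Applying that theorem with the role of $p$ played by $q_0$ then supplies some $\widetilde s\in(0,\infty)$ such that $\vx\mapsto-\log\bigl((q_0\ast G(\vec{0},s\id))(\vx)\bigr)$ is convex for all $s\ge\widetilde s$.

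Next I would invoke the elementary convolution identities $\delta_{\vx_i}\ast G(\vec{0},\tau\id)=G(\vx_i,\tau\id)$ and $G(\vx_i,t_0\id)\ast G(\vec{0},s\id)=G(\vx_i,(t_0+s)\id)$ (covariances of Gaussians add under convolution, exactly as used in the theorem's setup), which yield, for the empirical measure $p=\frac1n\sum_i\delta_{\vx_i}$,
\[
q_0\ast G(\vec{0},s\id)
 = \frac1n\sum_{i=1}^n G(\vx_i,(t_0+s)\id)
 = p\ast G\bigl(\vec{0},(t_0+s)\id\bigr).
\]
In other words, convolving $p$ with $G(\vec{0},t\id)$ for $t\ge t_0$ is literally the same as convolving the \emph{fixed} GMM $q_0$ with $G(\vec{0},(t-t_0)\id)$. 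Consequently, for every $t\ge t_0+\widetilde s$, setting $s=t-t_0\ge\widetilde s$ gives $-\log\bigl((p\ast G(\vec{0},t\id))(\vx)\bigr)=-\log\bigl((q_0\ast G(\vec{0},s\id))(\vx)\bigr)$, which is convex in $\vx$ by the previous paragraph; so $\widetilde t\coloneqq t_0+\widetilde s\in(0,\infty)$ does the job.

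There is no real obstacle here: the only point that needs care is that the Dirac masses are \emph{not} admissible components in Theorem~\ref{thm:Fconvex} (which requires strictly positive definite covariances), which is precisely why a strictly positive $t_0$ has to be split off before invoking the theorem, and the particular value of $t_0$ is immaterial. If instead an explicit threshold is wanted, the corollary can also be proved directly: $p\ast G(\vec{0},t\id)$ is the density of $Y=X+\sqrt{t}\,\xi$ with $X$ uniform on $\{\vx_i\}\subset\X$ and $\xi\sim\mathcal{N}(\vec{0},\id)$ independent, Tweedie's formula gives $\nabla_\vx^2\bigl(-\log(p\ast G(\vec{0},t\id))\bigr)=\tfrac1t\id-\tfrac1{t^2}\Cov(X\mid Y=\vx)$, and since $X\mid Y=\vx$ is supported in $\X$ its covariance is $\preceq\tfrac14\diameter(\X)^2\id$ (Popoviciu's inequality applied to every one-dimensional projection), so the Hessian is positive semidefinite for all $\vx$ as soon as $t\ge\tfrac14\diameter(\X)^2$.
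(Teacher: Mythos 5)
Your proof is correct, but it takes a genuinely different route from the paper's. The paper simply observes that $p\ast G(\vec{0},t\id)=\frac1n\sum_{i=1}^n G(\vx_i,t\id)$ is itself a GMM of the form~\eqref{eq:fgmm} with $\Sigma_i=\bm{0}$, and then notes that the proof of Theorem~\ref{thm:Fconvex} goes through verbatim because it only ever uses $\lambda_{\mathrm{min}}(\Sigma_i)\geq 0$; in other words, it reuses the theorem's \emph{proof} rather than its statement. You instead treat the theorem as a black box: by splitting off an arbitrarily small $t_0>0$ you manufacture a bona fide GMM $q_0$ with strictly positive definite covariances $t_0\id$, and the semigroup property of Gaussian convolution identifies $p\ast G(\vec{0},t\id)$ with $q_0\ast G(\vec{0},(t-t_0)\id)$. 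This is a legitimate and arguably cleaner reduction, since you correctly point out that degenerate (zero) covariances are not admissible under the theorem's hypotheses as literally stated --- a gap the paper glosses over. Your alternative direct argument via Tweedie's identity, $\nabla_\vx^2\bigl(-\log(p\ast G(\vec{0},t\id))\bigr)(\vx)=\tfrac1t\id-\tfrac1{t^2}\Cov(X\mid Y=\vx)$, combined with Popoviciu's inequality on one-dimensional projections of the conditional law supported in $\X$, is also sound and buys an explicit threshold $\widetilde{t}=\tfrac14\diameter(\X)^2$, which improves on the constant $\diameter(\X)^2$ obtained in the paper's proof of Theorem~\ref{thm:Fconvex}. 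The trade-off is that the paper's route yields the explicit (if looser) threshold uniformly for general GMMs with a single computation, whereas your black-box reduction only delivers existence of $\widetilde{t}$ unless one reopens the theorem's proof to extract the constant.
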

The proofs can be found in~\cref{apdx:proofs}.
Note that in this paper we only consider the variance exploding setting, however, similar results hold for variance preserving schemes. 

The underlying idea of the GNC is to speed up the estimation of the global minimum of a non-convex energy.
First, this energy is approximated by a one-parameter family of energies that become more convex for increasing~$t$.
Second, an initial smoothing parameter is selected such that the energy is convex.
Third, the initial problem is efficiently solved due to increased smoothness and convexity.
Then, the parameter is reduced such that the energy becomes gradually more non-convex and is in turn minimized starting from the previous solution.
The process is repeated until~$t\to0$.

Comparing denoising score-based models to GNC, we observe the following similarities.
First, the score networks also approximate a one-parameter family of the gradient of associated energies.
Second, the schedule of the smoothing parameter is frequently a priori fixed.
Finally, as we proved above, there exists a smoothing parameter such that the associated energy becomes convex in most practical cases.

Combining both approaches results in the GNC flow described in~\Algref{alg:graduatedNC}, which is a discretization of a continuous gradient flow starting from a convex approximation.
In detail, first a decreasing sequence of smoothing parameters~$\{t_i\}_{i=0}^I$ is fixed such that~$F(\vx,t_0)$ is convex.
Then, we estimate the minimum by a single gradient step using the variance~$t_i$ as preconditioning, which is motivated by Tweedie's identity~\citep{Ro56,Ef11} and the fact that for a smooth and convex function the minimal means squared error (MMSE) estimator is close to its minimum.
Then this estimator is gradually refined by reducing the smoothing~$t_i>t_{i+1}$ and updating the MMSE estimator conditioned on the previous result.

\begin{algorithm}[t]
\caption{Graduated non-convexity flow for minimizing a smoothed family of energies~$F(\vx,t)$}\label{alg:graduatedNC}
\textbf{Step 0:} Take~$I\in\N$, choose sequence $\{t_i\}_{i=0}^I\subset[\tmin,\tmax]$ s.t.~$t_{i}>t_{i+1}$ and~$F(\vx,t_0)$ convex, $\vx_0\in\X$, $\eta>0$ \\
\textbf{Step i:} $(0\leq i< I)$ 
Approximately minimize current energy~$F(\cdot,t_i)$ using single step
\[
\vx_{i+1}=\vx_i-\eta t_i\nabla_\vx F(\vx_i,t_i)
\] 
\end{algorithm}

\begin{figure*}[th]
\centering
\begin{tikzpicture}[every node/.append style={inner sep=1mm}]
\node at (-8.3,0) {\includegraphics[width=.35\linewidth]{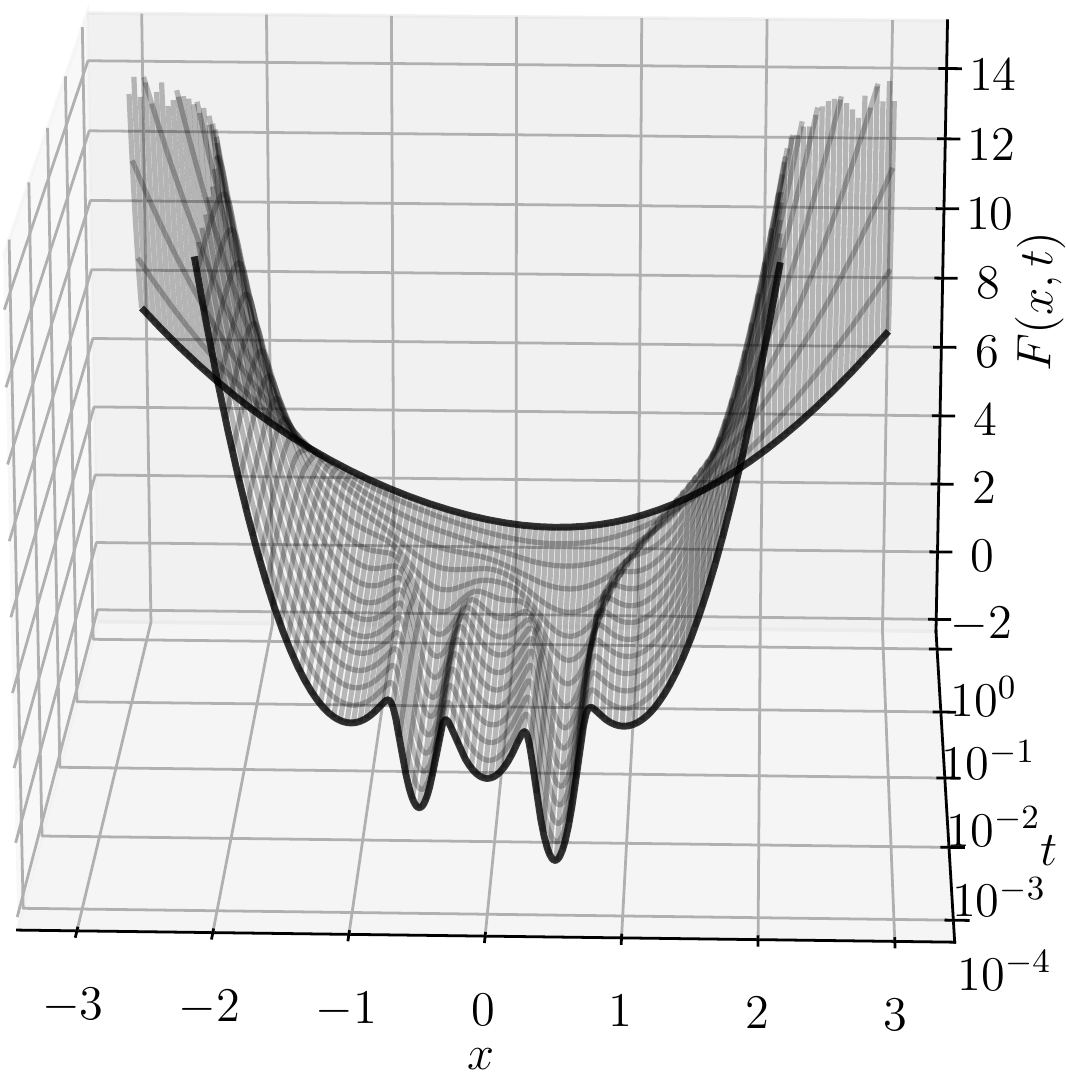}};
\node at (0,0) {\includegraphics[width=.6\linewidth]{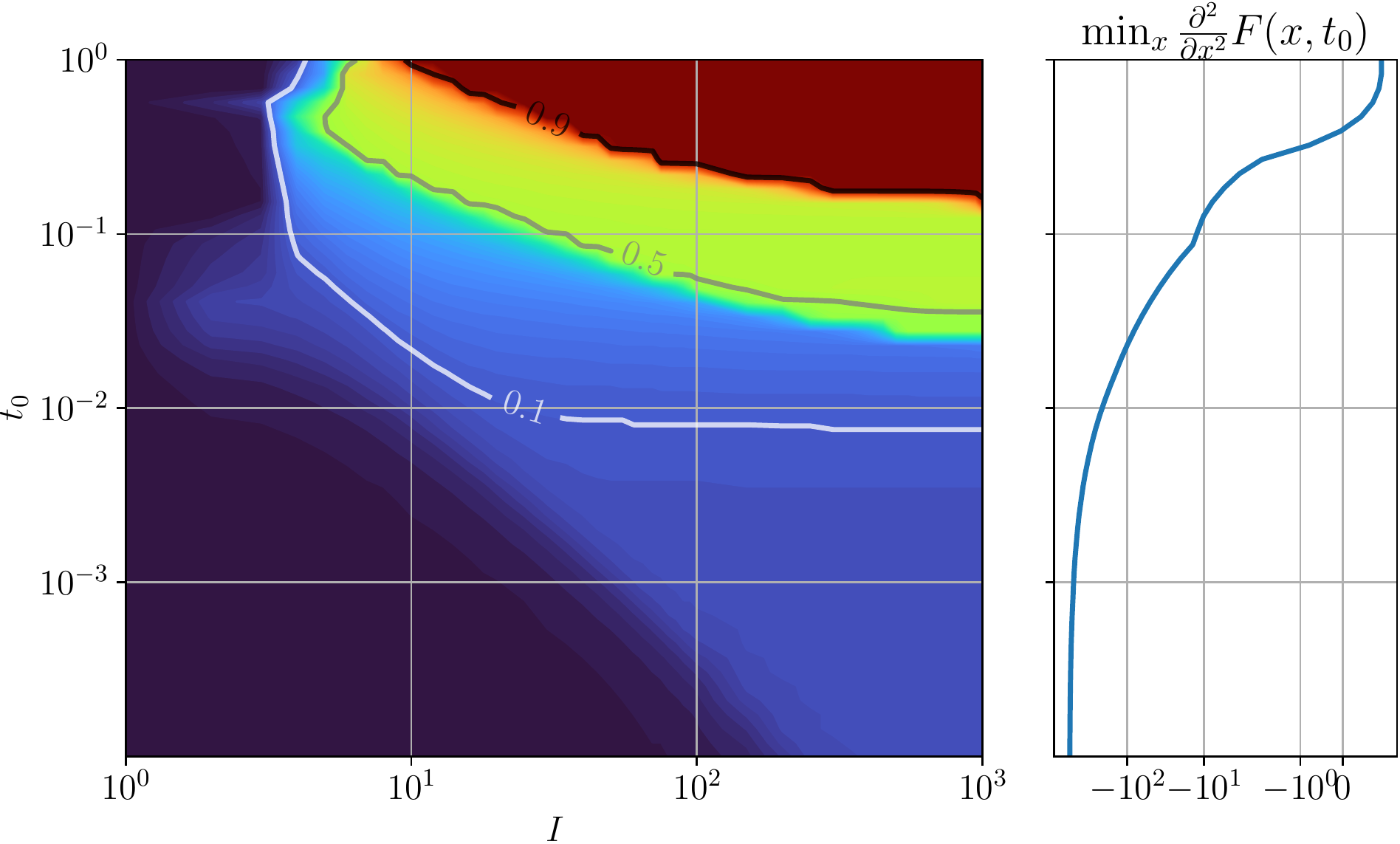}};
\fill[red, opacity=.1] (4.7-2.1,2.15) rectangle (4.7+.4,-2.45);
\node[text=red, anchor=south] at (4.,-2.5) {non-convex};
\fill[green, opacity=.1] (4.7-2.1,2.15) rectangle (4.7+.4,2.15+.5);
\node[text=green, anchor=north] at (4.,2.65) {convex};
\end{tikzpicture}
\label{fig:gmmOptimization}
\caption{Left: Illustration of the 1D example energy~\eqref{eq:gmmEx} using gray lines for different~$t\in[10^{-4},1]$.
Center: Visualization of the rate of trajectories attaining the global minimum at~$x=\frac{1}{2}$ using~$N=1\ 000$ equally spaced initial points for different~$t_0$ and~$I$.
The larger the initial smoothing~$t_0$, the fewer steps are required to obtain the perfect rate.
Right: Smallest second derivative of~$F$ w.r.t.~$x$ for different~$t_0$.
}
\end{figure*}
Next, we consider a simple 1D example to illustrate the effectiveness of the GNC flow.
In particular, we use a smoothed GMM corresponding to the energy
\begin{align} \label{eq:gmmEx}
F(x,t)=-\log\left(\sum_{i=1}^5 w_i G\left(\mu_i,\sigma_i^2+t\right)(x)\right)
\end{align}
consisting of five components over the domain $x\in[-3,3]$ and $t\in[10^{-4},1]$.
The detailed parameters are~$\bm{w}=\tfrac{1}{100}(5\ 15\ 15\ 60\ 5)$, $\bm{\mu}=(-1\ -\tfrac12\ 0\ \tfrac12\ 1)$, and $\bm{\sigma}^2=\tfrac{1}{100}(10\ 1\ 5\ 1\ 10)$.
Note that the addition of $t$ at the variances in~\eqref{eq:gmmEx} originates from the convolution of the data density with a Gaussian as considered in Theorem~\ref{thm:Fconvex}.
This energy is illustrated by the black lines at the left in~\figref{fig:gmmOptimization} for various~$t$.
As can be seen, increasing the smoothing parameter~$t$ results in smoother and more convex energies.

The plot in the center of \figref{fig:gmmOptimization} visualizes the rate of trajectories converging to the global minimum at~$x=\tfrac12$ as a function of the initial smoothing~$t_0$ and the number of steps~$I$.
For each~$t_0$ all trajectories start from~$N=1\ 000$ equally spaced initial positions~$x^0$ on $[-3,3]$ and are defined by performing $I$~GNC flow steps using a logarithmic smoothing schedule.
By comparing the different contour lines we observe that for larger initial smoothing fewer steps are required such that all trajectories converge to the global minimum.
This highlights the effectiveness of GNC.
Finally, the right plot in \figref{fig:gmmOptimization} depicts the smallest~$\frac{\partial^2}{\partial x^2}F(x,t_0)$ and thereby highlights convexity of~$F$ as a function of~$t_0$.

\section{Learning Gradually Non-convex Image Priors}
In this section, we transfer the insights gained from relations of GNC and score-based generative models to learn priors for natural images.

\subsection{FoE-like Prior Models}
Let $\vx\in\X\subset\R^d$ be an image of size~$d=m\times n\times c$, where $m$ represents its height, $n$ its width, and $c$ is the number of feature channels.
\citet{RoBl09} introduced a simple and versatile prior operating on images called fields of experts~(FoE).
This prior has been successfully applied to various inverse problems in imaging sciences and its essential building blocks are (local) convolutions that extract lower-level features as well as non-linear potential functions.
In particular, we consider the FoE model
\[
R_\mathrm{FoE}(\vx) = \scal{\vec{1}}{\left(\Phi \circ K\right)(\vx)},
\]
where the linear operator~$K\colon\R^d\to\R^{d_1}$ extracts~$N_1$ features using 2D convolution kernels~$k_i$ with~$d_1=m\times n\times N_1$.
The operator~$\Phi\colon\R^{d_1}\to\R^{d_1}$ applies to every feature channel~$k_i(\vx)$ a corresponding pixel-wise parametric non-linear potential~$\phi_i\colon\R\to\R$, and the scalar product denotes the sum over all pixels and channels.
The underlying idea is that every convolution kernel~$k_i$ specializes on a certain pattern and the associated potential~$\phi_i$ describes the corresponding energy, i.e., the negative logarithm of the density.
The non-linear functions~$\phi_i$ are typically learnable and implemented by simple parametric functions~\citep{RoBl09,ChRa14} or weighted radial basis functions~\citep{ChPo16,KoKl17}.

\subsubsection{Extending the Fields of Experts}
Due to the simplicity of the FoE prior a natural approach to include the conditioning concept of SBGMs is by affecting the potential functions to learn joint prior models.
In detail, the extended FoE reads as
\begin{align}\label{eq:r1}
R_1(\vx,t) = \scal{\vec{1}}{\left(\Phi_1(\cdot, t) \circ K_1\right)(\vx)},
\end{align}
where the non-linear function depends on~$t$, i.e., $\Phi_1\in\C^3(\R^{d_1}\times\T,\R^{d_1})$.
Consequently, also the pixel-wise non-linear functions of every feature channel~$\phi_{1j}\in\C^3(\R\times\T,\R)$, $j=1,\ldots,N_1$ depend on~$t$. 
Further, all~$\phi_{1j}$ are constructed using weighted 2D quartic spline basis functions, which are equally distributed over the input domain to ensure sufficient smoothness for gradient-based learning.
We refer to \cref{apdx:implementationDetails} for further details regarding the spline-based non-linear functions.

\subsubsection{Increasing Depth}
Since the prior~\eqref{eq:r1} essentially consists of a single layer, its expressiveness is limited to simple image features.
To increase capacity, we propose to stack multiple convolutions and parametric non-linear layers.
Then, an FoE-type prior facilitating $L$-layers reads as
\begin{align} \label{eq:RL}
R_L(\vx,t) = \scal{\vec{1}}{\left(\Phi_L(\cdot,t)\circ K_L\circ\cdots\circ\Phi_1(\cdot, t) \circ K_1\right)(\vx)}.
\end{align}
Each convolution~$K_i\colon\R^{d_{i-1}}\to\R^{d_i}$, $i=1,\ldots,L$ performs a linear combination of all input features, thereby enabling the mixing of features as typically performed in convolutional neural networks (CNNs).
In contrast to typical CNNs, we use parametric activation~$\Phi_i,\ldots,\Phi_{L-1}$ and potential~$\Phi_L$ functions that adapt to the corresponding feature channels.
At every layer~$i\in\{1,\ldots,L\}$ and for any feature channel~$j\in\{1,\ldots,N_i\}$, we employ a 2D parametric point-wise function~$\phi_{ij}\in\C^3(\R\times\T,\R)$ to non-linearly process the features.
This idea follows recent suggestions to facilitate spline-based parametric activation functions in deep CNNs~\citep{OcMe18,AzGu20}.
Further details on the parametric functions are in~\cref{apdx:implementationDetails}.

\subsection{Joint Learning using Score Matching}
Here, we elaborate on how to fit the parameters of the previously defined regularizers~$R_L\colon\X,\T,\Theta\to\R$ to the negative score of the \emph{joint} density~$\pdf{\rvy,\rt}:\X\times\T\to\R_+$ of the data.
For our previously defined degradation model, the joint density function reads as
\begin{align*}
\pdf{\rvy,\rt}(\vy,t) &= \left(\pdf{\rvx}\ast G(\vec{0},t\id)\right)(\vy) \pdf{\rt}(t)\\
&\propto \E_{\vx\sim\dist{\rvx}}\left[\exp\left(-\frac{\norm{\vy-\vx}_2^2}{2 t}\right)\right] \pdf{\rt}(t),
\end{align*}
where $\pdf{\rt}$ is the prior of the smoothing parameter.
Then, the objective function of (explicit) score matching is given by
\begin{align} \label{eq:jsm}
&J_\mathrm{SM}(\theta) = \\
&\hspace{1ex}\E_{\vy,t\sim\dist{\rvy,\rt}}\left[\tfrac12\norm{\nabla R_L(\vy,t;\theta) -\left(- \nabla\log p_{\rvy,\rt}(\vy,t)\right)}_M^2\right], \notag
\end{align}
where $\nabla$ denotes the full gradient of a function and an additional index denotes the gradient w.r.t. only this variable and 
$M\in\R^{d+1\times d+1}$ is a positive definite block-diagonal matrix, i.e.,
\[
M = \begin{pmatrix}
t\id & \vec{0} \\
\vec{0}^T & m_t
\end{pmatrix}.
\]
By applying the metric, we obtain
\begin{align}\label{eq:jsmParts}
&J_\mathrm{SM}(\theta)=\\
&\hspace{1ex}\E_{\vy,t\sim\dist{\rvy,\rt}}\Big[\tfrac{t}{2}\norm{\nabla_\vy R_L(\vy,t;\theta) -\left(- \nabla_\vy\log p_{\rvy,\rt}(\vy,t)\right)}_2^2 \notag\\
&\hspace{6ex}+\tfrac12\left(\tfrac{\partial}{\partial t}R_L(\vy,t;\theta) -\left(-\tfrac{\partial}{\partial t}\log p_{\rvy,\rt}(\vy,t)\right)\right)^2m_t\Big].\notag
\end{align}
Note that~$J_\mathrm{SM}$ decouples into a score matching objective on noisy images~$\vy$ and the smoothing parameter~$t$; the metric (in particular~$m_t>0$) enables balancing of both terms.
The scaling of the first term by~$t$ is a common variance reduction technique in denoising SM~\citep{SoEr19,HuLi21}.

To avoid the computation of the expectation over the true data in the gradient of the joint distribution, we apply denoising score matching to the noisy image term.
In addition, we replace the score matching objective w.r.t.~$t$ by its implicit pendant and get
\begin{align*}
&J_\mathrm{SM}(\theta)=\\
&\hspace{1ex}\E_{\vx,\vy,t\sim\dist{\rvx,\rvy,\rt}}\left[\tfrac{t}{2}\norm{\nabla_\vy R_L(\vy,t;\theta) -\tfrac{1}{t}(\vy-\vx)}_2^2 \right.\\
&\hspace{5ex}+ \left. \tfrac{m_t}{2}\left(\left(\tfrac{\partial}{\partial t}R_L(\vy,t;\theta)\right)^2 -2\tfrac{\partial^2}{\partial t^2}R_L(\vy,t;\theta)\right)\right] + C,
\end{align*}
where $C$ is an additive constant.
The proof can be obtained by combining the equivalence proofs of~\citet{Hy05} and~\citet{Vi11}.
To further simply the objective, we perform the change of variables~$\vy=\vx+\sqrt{t}\vn$, where~$\vn\sim\mathcal{N}(\vec{0},\id)$.
Then, we get the equivalent loss function
\begin{align} \label{eq:loss}
&J(\theta)=\\
&\hspace{1ex}\E_{\vx,\vn,t\sim\dist{\rvx,\rvn,\rt}}\tfrac{1}{2}\left[\norm{\sqrt{t}\nabla_\vy R_L(\vy,t;\theta) -\vn}_2^2\right. \notag\\ 
&\hspace{9ex}\left.+m_t\left(\left(\tfrac{\partial}{\partial t}R_L(\vy,t;\theta)\right)^2 -2\tfrac{\partial^2}{\partial t^2}R_L(\vy,t;\theta)\right)\right]. \notag
\end{align}
In contrast to typical denoising score matching-based loss functions~\citep{SoEr19,HoJa20}, this loss introduces a regularization along the smoothing direction~$t$.
In particular, the loss favors energies~$R_L$ that slowly change in this direction and are preferably convex.
Note that these properties are desirable for any gradient-based optimization scheme operating on the joint energy~$F$.

\subsubsection{Logarithmic Reparametrization}
The score-matching-based training ensures that the non-linear functions better approximate the score of the true data of the features as~$t\to\tmin$.
Thus, it is reasonable to distribute the learnable weights of~$\phi_i$ toward this regime to account for the increasing complexity.
Therefore, we facilitate the logarithmic reparametrization~$\hatt=\log(t)$, $\tminh=\log(\tmin)$, and $\tmaxh=\log(\tmax)$, in analogy to~\citet{KaMi22}.
Then, the domain~$\widehat{\T}$ is on the negative halfspace and the loss~\eqref{eq:loss} changes to
\begin{align} \label{eq:losslog}
&\widehat{J}(\theta)=\\
&\hspace{1ex}\E_{\vx,\vn,\hatt\sim\dist{\rvx,\rvn,\widehat{\rt}}}\frac12\Bigg[\norm{e^{\hatt/2}\nabla_\vy R_L(\vy,\hatt;\theta) -\vn}_2^2 \notag\\
&\hspace{8ex}+ m_t\left(\left(\tfrac{\partial}{\partial\hatt}R_L(\vy,\hatt;\theta)\right)^2 -2\tfrac{\partial^2}{\partial \hatt^2}R_L(\vy,\hatt;\theta)\right)\Bigg]. \notag
\end{align}
We highlight that the gradient and the Hessian are measured on the logarithmic domain to avoid intensive regularization toward~$\tmin$.

\section{Solving Inverse Problems using Gradually Non-convex Image Priors}
In various imaging applications, the task is to determine an underlying image~$\rvx$ given observations~$\rvz$.
The observations are related to the target through the forward problem
\[
\rvz = A \rvx + \rvzeta,
\]
where~$\rvzeta$ represents additive noise and $A$ describes the measurement process.
The simplest example is image denoising, where~$A=\id$ and the distribution of~$\rvzeta$ describe the noise type.
In the case of image inpainting, $A$ applies a binary mask to every image element, which is 1 if the associated pixel is observed and 0 otherwise, and~$\rvzeta\equiv\vec{0}$.

Frequently, the maximum a posteriori estimator is computed to approximate the target, which amounts to
\[
\max_{\vx\in\X}\left\{\pdf{\rvx\vert\rvz}(\vx\vert\vz)\propto \pdf{\rvz\vert\rvx}(\vz\vert\vx)\pdf{\rvx}(\vx)\right\}
\]
due to Bayes.
In the negative log domain, we get
\[
\min_{\vx\in\X}\left\{ -\log\pdf{\rvx}(\vx) -\log\pdf{\rvz\vert\rvx}(\vz\vert\vx) = R(\vx) + D(\vz,\vx)\right\},
\]
which is also known as the variational approach.
Here, the negative log-prior is equivalent to the regularizer~$R\colon\X\to\R$ and the negative log-likelihood equals the data fidelity term~$D\colon\Z\times\X\to\R$.
The data fidelity models the forward problem and ensures consistency to the observations, whereas, the regularizer incorporates prior knowledge of the solution.
Throughout this section, we assume that the data fidelity term has a simple proximal mapping, which is the case for many inverse problems in imaging.
In the case of image denoising with additive Gaussian noise of variance~$\sigma^2$, the data fidelity and the corresponding proximal map read as
\[
D(\vz,\vx)=\tfrac{1}{2\sigma^2}\norm{\vx-\vz}_2^2,\ \prox_{\tau D}(\vx)= \frac{\vx+\tfrac{\tau}{\sigma^2}\vz}{1+\tfrac{\tau}{\sigma^2}}    
\]
and for image inpainting we have
\[
D(\vz,\vx)=\delta(A\vx-\vz),\ \prox_{\tau D}(\vx)_i=\begin{cases}
x_i &\text{if } A_{ii} = 1\\
z_i &\text{else}
\end{cases},
\]
where~$\delta$ is the indicator function of~$\{\vec{0}\}$. 
To utilize the statistical knowledge of our learned prior~$R_L$, we next describe suitable ways to handle the additional smoothing parameter~$\hatt$.

\subsection{Joint Optimization} \label{sec:jointOptimization}

As presented in \secref{sec:gnc}, decreasing the smoothing parameter~$\hatt$ results in peakier and more non-convex energies.
Thus, there are pronounced local minima at~$\tminh$ and $\frac{\partial}{\partial\hatt}R_L$ is likely to point toward~$\tminh$.
Consequently, it is reasonable to minimize the joint energy also w.r.t. the smoothing parameter~$\hatt$.
Thus, we seek to solve the optimization problem
\begin{align} \label{eq:jointEnergy}
\min_{\vx\in\X,\ \hatt\in[\tminh,\tmaxh]} \left\{E(\vx,\hatt)\coloneqq R_L(\vx,\hatt) + D(\vz,\vx)\right\}.
\end{align}
A straightforward approach, requiring little knowledge of the objective, is adapting an alternating proximal gradient scheme~\citep{BoSa14}.
Further, even a Lipschitz backtracking~\citep{BeTe09} could be used because the energy can be easily evaluated.
However, we instead exploit the fact that~$\nabla_\vx R_L(\cdot,\hatt)$ is approximate $\exp(-\hatt)$-Lipschitz and propose the preconditioned proximal gradient algorithm listed in \Algref{alg:jointMinimization}.
Note that the projected gradient step w.r.t.~$\hatt$ is preconditioned by~$d^{-1}$ to account for the number of summands of~\eqref{eq:RL}.
\begin{algorithm}
\caption{Preconditioned proximal gradient for joint optimization}\label{alg:jointMinimization}
\textbf{Step 0:} Take $\vx^0\in\X\subset\R^d$, and sufficiently large $\hatt_0\in[\tminh,\tmaxh]$.
Choose~$\eta>0$\\
\textbf{Step i:} $(i\geq 0)$ iterate
\begin{align*}
\vx_{i+1} &= \prox_{\eta e^{\hatt_i} D(\vz,\cdot)}(\vx_i-\eta e^{\hatt_i}\nabla_\vx R_L(\vx_i,\hatt_i))\\
\hatt_{i+1} &= \proj_{[\tminh,\tmaxh]}\left(\hatt_i-\frac{\eta}{d}\nabla_\hatt R_L(\vx_i,\hatt_i)\right)
\end{align*}
\end{algorithm}

\subsection{Predefined Smoothing Schedule} \label{sec:predefinedSchedule}
The second approach is motivated by SBGMs~\citep{SoEr19,HoJa20}.
It is equivalent to the GNC flow presented in~\Algref{alg:graduatedNC}.
However, the update step is replaced by the proximal gradient step
\begin{align} \label{eq:fixedScheme}
\vx_{i+1} = \prox_{\eta e^{\hatt_i} D(\vz,\cdot)}(\vx_i-\eta e^{\hatt_i}\nabla_\vx R_L(\vx_i,\hatt_i))
\end{align}
to account for the additional data fidelity.
Further, we use a fixed linear schedule from the initial~$\hatt_0$ to $\tminh$.


\subsection{Task-specific Learning of Smoothing Schedule} \label{sec:learnedVN}
Since it is not clear how to choose the smoothing scheduler~$\{\hatt_i\}_{i=1}^I$, why not learn it from data for a specific task?
To do so, we propose to ``unroll'' the optimization scheme~\eqref{eq:fixedScheme} for $I\in\N$ steps and learn all~$\{\hatt_i\}_{i=1}^I$ and individual step sizes~$\{\eta_i\}_{i=1}^I$ such that the final output~$\vx_I$ is close to its corresponding ground truth, i.e.,
\[
\min_{\{\hatt_i\}_{i=1}^I\subset[\tminh,\tmaxh],\{\eta_i\}_{i=1}^I\subset\R_+} \E_{\widehat{\vx},\vz\sim\dist{\rvx,\rvz}} \left\{\norm{\vx_I-\widehat{\vx}}_2^2\right\}
\]
subject to $
\vx_{i+1} = \prox_{\eta_i D(\vz,\cdot)}(\vx_i-\eta_i \nabla_\vx R_L(\vx_i,\hatt_i))
$
for~$i=1,\ldots,I-1$ and $\vx_0=\vz$. 
Since the gradient w.r.t.~$\hatt$ of~$\nabla_\vx R_L$ is smooth due to the quartic spline interpolation, any gradient-based optimization algorithm can be used for learning.

Interestingly, this observation relates the successful trainable non-linear reaction-diffusion (TNRD) models of~\citet{ChPo16} and variational networks (VNs)~\citep{KoKl17,HaKl18,EfKo20} to SBGMs.
Thus, VNs -- with temporally changing parameters across the steps -- can be interpreted as a learned proximal-gradient scheme on gradually more non-convex energies.
This relation enables an unsupervised pretraining of the prior advocated in TNRDs or VNs, followed by a task-specific fine-tuning of either just the smoothing schedule or the entire model.

In contrast to the two previous approaches, here the only hyperparameter is the number of steps~$I$, which is typically constrained by the time budget in applications.
Moreover, if only the smoothing schedule and the step sizes are learned, just a few paired training samples are required for supervised learning due to the small parameter space.

\begin{figure*}[th!]
\centering
\begin{tikzpicture}[every node/.append style={inner sep=1mm}, label/.style={draw=black,fill=white, inner sep=.5ex,rounded corners=1ex}]

\node[anchor=south] (r1) at (0,0) {\includegraphics[width=.85\linewidth]{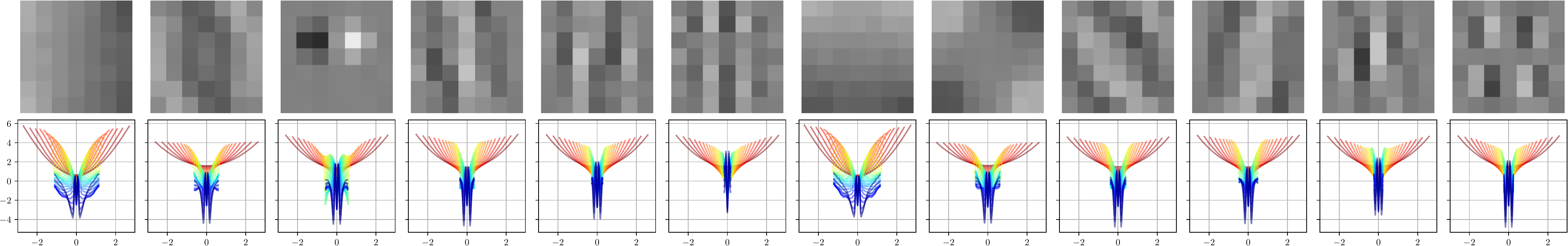}};
\draw[black, very thick, rounded corners] (r1.north east) -- node[midway,label,yshift=-1.25mm] {$R_1$} (r1.north west) -- (r1.south west) -- (r1.south east);

\node[anchor=south] (r2) at (0,-5.3) {\includegraphics[width=.85\linewidth]{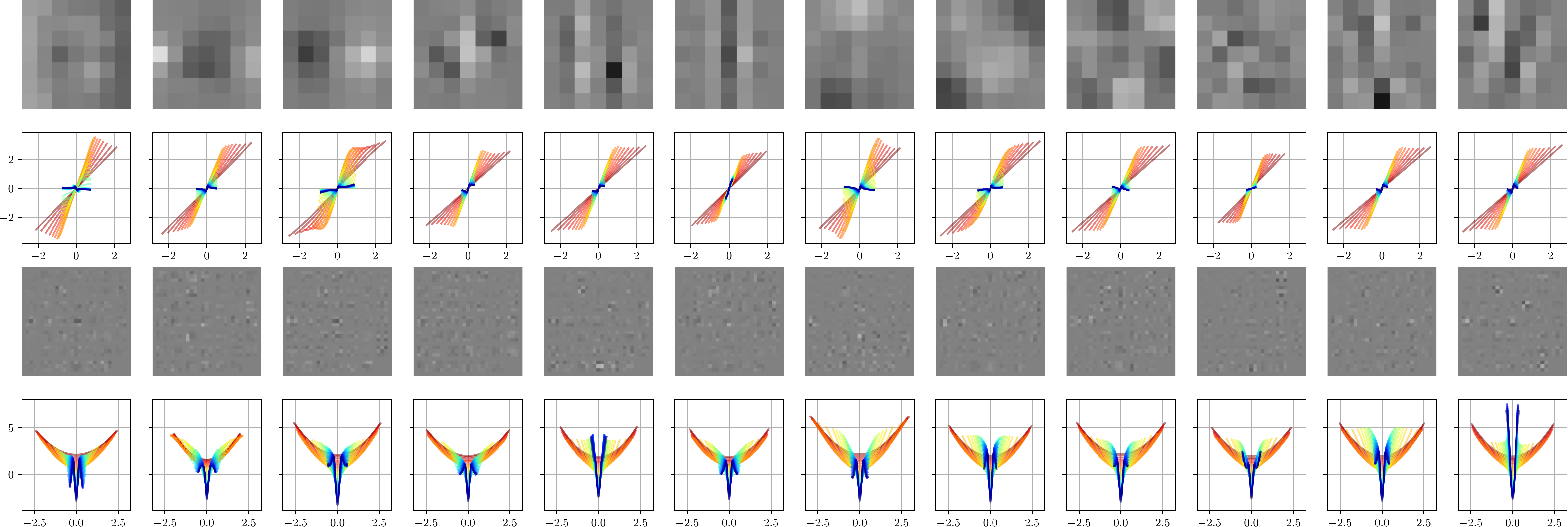}};
\draw[black, very thick, rounded corners] (r2.north east) -- node[midway,label,yshift=-1.25mm] {$R_2$} (r2.north west) -- (r2.south west) -- (r2.south east);

\node[anchor=north] (r3) at (0,-5.5) {\includegraphics[width=.85\linewidth]{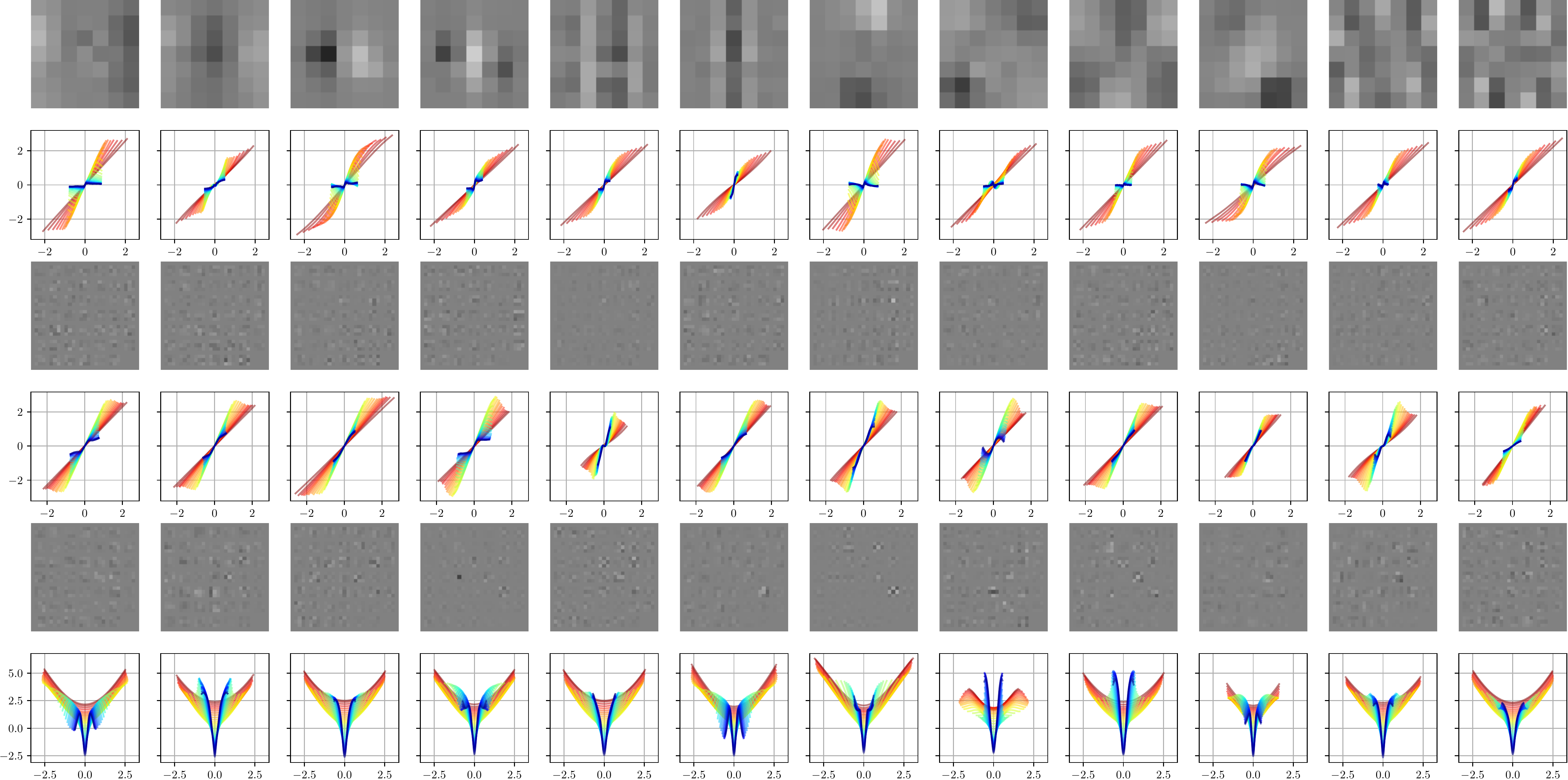}};
\draw[black, very thick, rounded corners] (r3.north east) -- node[midway,label,yshift=-1.25mm] {$R_3$} (r3.north west) -- (r3.south west) -- (r3.south east);
\end{tikzpicture}
\caption{
Visualization of the learned parameters of~$R_L$ for $L\in\{1,2,3\}$.
The columns show the first ten parameter sets of each layer.
The gray-scale images depict the convolution kernels of each input channel, while the plots below illustrate the corresponding activation functions and warmer colors indicate larger~$\hatt\in[\tminh,\tmaxh]$.
}
\label{fig:paramsRegs}
\end{figure*}

\section{Experimental Results}
In this section, we first visually analyze the learned regularizers and then compare the three previous inference techniques for image denoising and inpainting.

\subsection{Visual Analysis of Learned Priors}
Since the extended FoE priors are rather simple, a visual inspection of the parameters is still feasible.
\Figref{fig:paramsRegs} illustrates a subset of the parameters of~$R_1,R_2$, and $R_3$, where the gray-scale images show the learned convolution kernels and the plots below the corresponding 2D non-linear functions.
The blue and red colors correspond to~$\tminh$ and $\tmaxh$, respectively.
We use the effective range of the data defined by the $.01$ and $.99$-quantiles to show the effective domain of every activation function for various~$\hatt$.
This domain increases along $\hatt$ due to the variance exploding setup.
The layers of the regularizers are visualized from top to bottom.

The potential functions~$\phi_L$ (bottom of each box) nicely illustrate the GNC effect and the red functions ($\tmaxh$) become more quadratic.
Comparing the potential functions of the different priors at~$\tminh$ (blue lines), we observe that with an increasing number of layers~$L$, they become simpler in the sense that the number of local minima gets smaller and the support regions of each minimum are increased.
This probably originates from the fact that the complexity is distributed over multiple layers for~$L>2$.
Moreover, we see the GNC effect also for~$R_2$ and $R_3$ not only since the potential functions become quadratic toward~$\tmaxh$ but also because the activation functions of the hidden layers get linear.

\begin{figure}[t!]
\centering
\begin{tikzpicture}[every node/.append style={inner sep=1mm}, label/.style={draw=black,fill=white, inner sep=.5ex,rounded corners=.25ex}]
\node[anchor=center] (x) at (0,0) {\includegraphics[width=.9\linewidth]{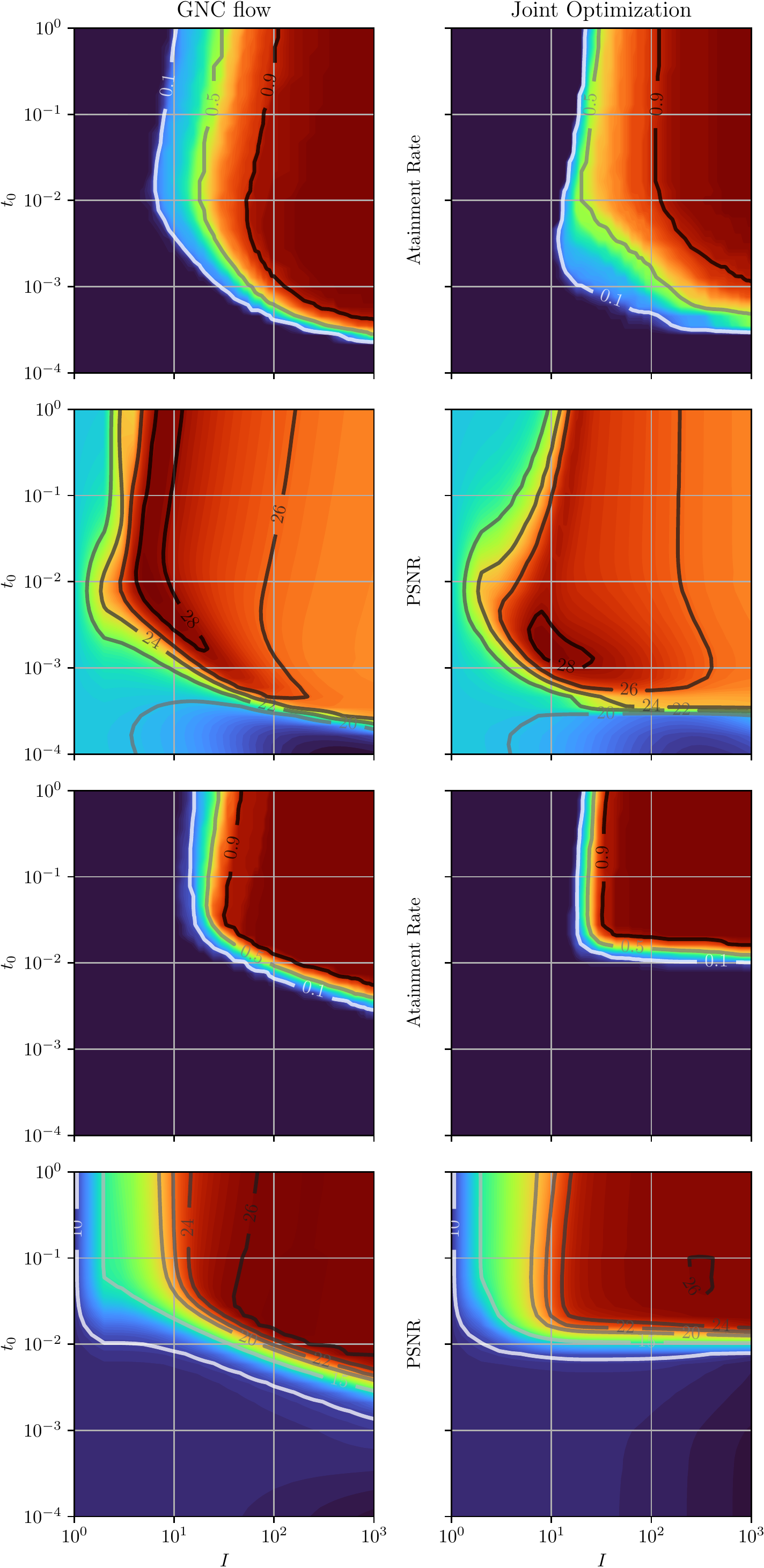}};
\draw[black, very thick] (0,0+.1) -| (x.north west) -- node[midway,label,yshift=-.75mm] {Denoising} (x.north east) |- cycle;

\draw[black, very thick] (x.south west) -- node[midway,label,yshift=.75mm] {Inpainting} (x.south east) |- (0,.08) -| cycle;
\end{tikzpicture}
\caption{
Visualization of the average global minimum attainment rate and the PSNR over the BSDS68 test set for the GNC flow (left, \eqref{eq:fixedScheme}) and the proposed joint optimization (right, \Algref{alg:jointMinimization}) for image denoising (top) and inpainting (bottom) using~$\eta=1$.
}
\label{fig:algComparison}
\end{figure}

\subsection{Ablation of Smoothing Schedule Selection}
\begin{figure}[t]
\centering
\includegraphics[width=.9\linewidth]{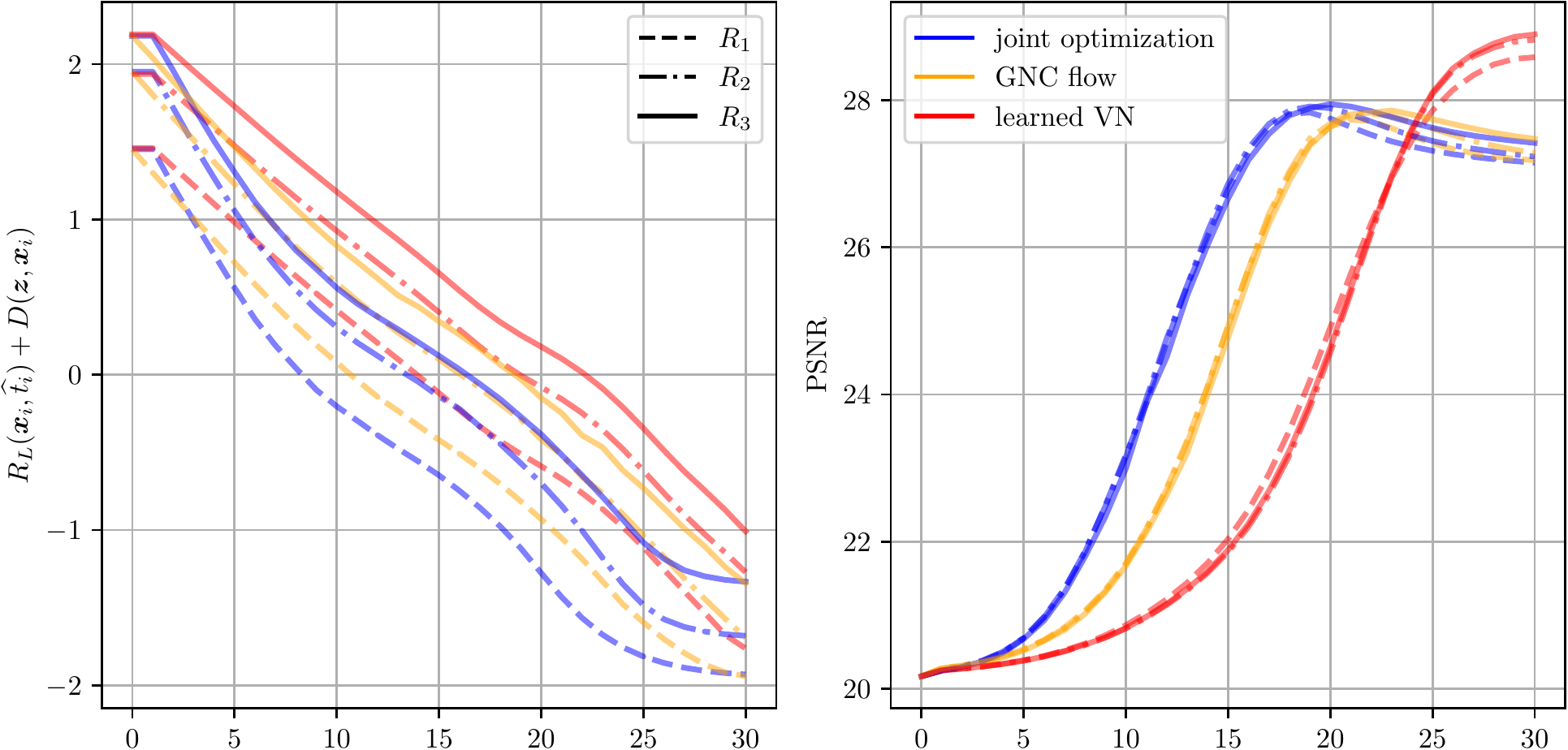}
\caption{
Comparison of the considered solution techniques for image denoising using~$I=30$ steps for the three priors~$R_L$, $L\in\{1,2,3\}$.
The left plot depicts the average energy for each step of the BSDS68 dataset, while on the right the corresponding PSNR scores are visualized.
}
\label{fig:allThreeComparison}
\end{figure}
Next, we compare the different inference techniques for solving inverse problems using the learned~$R_L$.
In \figref{fig:algComparison}, the behavior of the GNC flow~(\Secref{sec:predefinedSchedule}) and the joint optimization~(\Secref{sec:jointOptimization}) is compared for image denoising (top, $\sigma=0.1$) and inpainting (bottom, $80\%$ missing pixels).
In analogy to the 1D introductory example, the attainment rate of the global minimal energy is presented in the top row of each block to show the effectiveness of the GNC principle.
In detail, it is the average across the BSDS68~\cite{MaFo01} test set for a broad range of initial smoothing~$t_0$ and number of steps~$I$.
Comparing both algorithms, we observe only minor differences and the optimal attainment rate is obtained for sufficiently large initial smoothing~$t_0$ and number of steps~$I$.

In contrast, the average PSNR scores w.r.t.~the ground truth (second row in every box) behave differently.
In particular for image denoising, the maximum PSNR score is obtained before the global minimum energy is attained, as can be seen by comparing the plots in the first and second rows.
This originates from a modeling and optimization trade-off described by~\citet{EfKo20}, where similar models have been learned in a discriminative way.
Interestingly, this effect also arises in our generatively trained models.
The effect is not as pronounced for image inpainting since the problem is harder because the initial image~$\vx_0=\vz$ has many missing pixels set to $0$.
Thus, the initial smoothing~$\hatt_0$ and the number of steps~$I$ need to be properly selected to maximize the PSNR score.

As outlined in \Secref{sec:learnedVN}, a suitable way of determining the hyperparameters of a GNC flow is by learning them from data for a specific task.
Here, only the number of steps~$I$ is fixed a priori and the corresponding step sizes and smoothing schedule are learned to maximize performance in analogy to VNs.
\Figref{fig:allThreeComparison} illustrates the energy and PSNR curves of all three schemes for an image denoising problem ($\sigma=0.1$).
While GNC flow and the joint optimization almost attain the global minimum after $30$ steps, the learned VN does not.
However, in terms of PSNR score, the learned VN outperforms the other approaches and reaches the maximum exactly at~$30$ steps and requires only training of~$2I=60$ parameters.

\section{Conclusion}
In this work, we established connections between denoising score-based models and the GNC principle in optimization.
We found that a perturbation of the data by additive Gaussian noise leads to smoother densities respectively more convex energies with increasing variance.
We demonstrated this effect by learning a one-parameter family of FoE priors, where the smoothing parameter can be naturally incorporated into the potential functions.
We showed that these priors indeed become convex for sufficiently large smoothing and can be easily incorporated into existing approaches for solving inverse problems such as MAP estimation.
In future work, we will extend our work to more expressive priors and more challenging inverse problems.

\bibliography{references}
\bibliographystyle{icml2023}

\newpage
\appendix
\onecolumn

\section{Proofs of Theorem~\ref{thm:Fconvex} and Corollary~\ref{cor:empirical}} \label{apdx:proofs}

\begin{theorem}
Let~$\X\subset\R^d$ be a bounded set such that diameter~$\diameter(\X)<\infty$ and consider a GMM of the form
\[
p(\vx) = \sum_{i=1}^n w_i G(\vx_i,\Sigma_i)(\vx),
\]
where $(w_1\ \cdots \ w_n)^\top\in\Delta^n$. Assume~$\{\vx_i\}_{i=1}^n\subset\X$.
Then there exists a smoothing parameter~$\widetilde{t}\in(0,\infty)$ such that the smoothed energy
\[
F(\vx,t)\coloneqq -\log \big( (p \ast G(\vec{0}, t\id))(\vx)\big)
\]
is \emph{convex} w.r.t.~$\vx$ for all~$t\geq\widetilde{t}$.
\end{theorem}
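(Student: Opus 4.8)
The plan is to differentiate the smoothed energy twice, write the Hessian as a positive-definite ``curvature'' term minus a correlation term, and show the former dominates once the smoothing $t$ is large, using crucially that the data $\vx_i$ and the points $\vx$ of interest lie in the bounded set $\X$. First, since convolving Gaussians adds covariances, $q_t\coloneqq p\ast G(\vec 0,t\id)=\sum_{i=1}^n w_i\, G(\vx_i,C_i)$ with $C_i\coloneqq\Sigma_i+t\id$, so $F(\cdot,t)=-\log q_t$ and convexity of $F(\cdot,t)$ on $\X$ amounts to $\nabla^2_\vx(-\log q_t)(\vx)\succeq 0$ for $\vx\in\X$. Writing $q_t=\sum_i e^{r_i}$ with $r_i(\vx)=\log\!\big(w_i G(\vx_i,C_i)(\vx)\big)$, a short computation -- the Hessian identity for a log-sum-exp function -- gives
\[
\nabla^2_\vx(-\log q_t)(\vx)=\sum_{i=1}^n\rho_i\,C_i^{-1}\;-\;\Cov_{\rho}(\vu),
\]
where $\rho_i=\rho_i(\vx)\coloneqq w_i G(\vx_i,C_i)(\vx)/q_t(\vx)\in\Delta^n$ are the posterior weights, $\vu_i=\vu_i(\vx)\coloneqq C_i^{-1}(\vx-\vx_i)$, and $\Cov_{\rho}(\vu)\succeq 0$ is the covariance matrix of the family $\{\vu_i\}_{i=1}^n$ under $\rho$. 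So it suffices to prove $\Cov_{\rho}(\vu)\preceq\sum_i\rho_i C_i^{-1}$ for $t$ large.

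I would then estimate each side. For the positive term, $C_i^{-1}=(\Sigma_i+t\id)^{-1}\succeq(\bar\sigma^2+t)^{-1}\id$ with $\bar\sigma^2\coloneqq\max_i\norm{\Sigma_i}$, hence $\sum_i\rho_iC_i^{-1}\succeq(\bar\sigma^2+t)^{-1}\id$. For the correlation term, use $\Cov_{\rho}(\vu)\preceq\max_{i,j}\norm{\vu_i-\vu_j}_2^2\,\id$ together with the splitting $\vu_i-\vu_j=C_i^{-1}(\vx_j-\vx_i)+(C_i^{-1}-C_j^{-1})(\vx-\vx_j)$ and the operator-norm bounds $\norm{C_i^{-1}}\le t^{-1}$, $\norm{C_i^{-1}-C_j^{-1}}=\norm{C_i^{-1}(\Sigma_j-\Sigma_i)C_j^{-1}}\le 2\bar\sigma^2 t^{-2}$; since $\vx,\vx_i,\vx_j\in\X$, every distance $\norm{\vx-\vx_j}$, $\norm{\vx_j-\vx_i}$ is at most $\diameter(\X)$, so $\norm{\vu_i-\vu_j}_2\le\diameter(\X)\,(t^{-1}+2\bar\sigma^2 t^{-2})$ and $\Cov_{\rho}(\vu)\preceq\diameter(\X)^2(t^{-1}+2\bar\sigma^2 t^{-2})^2\id$. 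Combining, $\nabla^2_\vx(-\log q_t)(\vx)\succeq\bigl((\bar\sigma^2+t)^{-1}-\diameter(\X)^2(t^{-1}+2\bar\sigma^2 t^{-2})^2\bigr)\id$, and the scalar factor behaves like $t^{-1}-(\bar\sigma^2+\diameter(\X)^2)t^{-2}+O(t^{-3})$, hence is strictly positive for every $t$ beyond an explicit threshold $\widetilde t$ depending only on $\diameter(\X)$ and $\bar\sigma^2$. This proves the theorem.

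I expect the one point needing care to be the uniform-in-$\vx$ control of $\Cov_{\rho}(\vu)$, which is exactly where $\diameter(\X)<\infty$ is used: the term $(C_i^{-1}-C_j^{-1})(\vx-\vx_j)$ is harmless only because $\vx$ is confined to $\X$, and on an unbounded domain it grows linearly in $\norm{\vx}$ -- indeed for distinct anisotropic $\Sigma_i$ one can exhibit GMMs for which $\nabla^2_\vx(-\log q_t)$ has a negative eigenvalue far from $\X$ for \emph{every} $t$, so the boundedness hypothesis is essential. Corollary~\ref{cor:empirical} is then immediate from the same computation: for $p=\tfrac1n\sum_i\delta_{\vx_i}$ one has $q_t=\tfrac1n\sum_i G(\vx_i,t\id)$, all $C_i=t\id$, and $\vu_i-\vu_j=t^{-1}(\vx_j-\vx_i)$ carries no $\vx$-dependence at all; thus $\Cov_{\rho}(\vu)\preceq\diameter(\X)^2 t^{-2}\id$ and $\sum_i\rho_iC_i^{-1}=t^{-1}\id$ for \emph{every} $\vx\in\R^d$, giving $\nabla^2_\vx(-\log q_t)\succeq(t^{-1}-\diameter(\X)^2 t^{-2})\id\succ0$ as soon as $t>\diameter(\X)^2$, i.e.\ global convexity in this case.
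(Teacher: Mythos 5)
Your proof is correct, and while it starts from the same place as the paper --- computing the Hessian of $-\log q_t$ for the explicit mixture $q_t=\sum_i w_i\,G(\vx_i,\Sigma_i+t\id)$ --- it organizes that Hessian in a genuinely different way. The paper writes
\[
\nabla_\vx^2 F=\tfrac{1}{f}\Big\{\textstyle\sum_i w_i G(\vx_i,\widetilde{\Sigma}_i)\big(\widetilde{\Sigma}_i^{-1}-\vr_i\vr_i^\top\big)+\tfrac{1}{f}\nabla_\vx f\,(\nabla_\vx f)^\top\Big\},
\]
discards the positive semidefinite rank-one term, and demands that \emph{each} summand $\widetilde{\Sigma}_i^{-1}-\vr_i\vr_i^\top$ be positive semidefinite, which reduces to $t\geq\norm{\vx-\vx_i}^2$ and hence to the clean threshold $\widetilde{t}=\diameter(\X)^2$. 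You instead keep that rank-one term and fold it into the exact log-sum-exp identity $\nabla_\vx^2(-\log q_t)=\sum_i\rho_iC_i^{-1}-\Cov_\rho(\vu)$, then bound the posterior covariance by the squared spread of the $\vu_i$; all of your individual estimates (the Popoviciu-type bound $\Cov_\rho(\vu)\preceq\max_{i,j}\norm{\vu_i-\vu_j}^2\id$, $\norm{C_i^{-1}}\le t^{-1}$, the resolvent identity for $C_i^{-1}-C_j^{-1}$) check out. The paper's route is shorter and yields an explicit threshold; yours is sharper in an instructive way. Because $\Cov_\rho(\vu)$ depends only on the differences $\vu_i-\vu_j$, the $\vx$-dependence survives only through $(C_i^{-1}-C_j^{-1})(\vx-\vx_j)$, so in the equal-covariance case --- in particular for \cref{cor:empirical}, where $C_i=t\id$ --- you obtain convexity on all of $\R^d$ for $t>\diameter(\X)^2$, whereas the paper's per-component argument only certifies convexity for $\vx\in\X$. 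Your accompanying remark that distinct anisotropic $\Sigma_i$ can destroy convexity far from $\X$ for every $t$ correctly identifies why the general statement must be restricted to a bounded region, a point the paper leaves implicit.
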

\begin{proof}
The smoothed energy is defined as a convolution of Gaussians~\citep{Du19}, hence its explicit form reads as
\begin{align} \label{eq:fgmm}
F(\vx,t) = -\log \underbrace{\sum_{i=1}^n w_i G(\vx_i,\Sigma_i+t\id)(\vx)}_{\eqqcolon f(\vx,t)} = -\log f(\vx,t).
\end{align}
Since $F\in\C^{\infty}(\X\times \R_{++}, \R)$, the proof relies on showing positive definiteness of the Hessian~$\nabla_\vx^2 F(\vx,t)$ for any~$\vx\in\X$.
Let us first compute the gradient~$f$
\begin{align*}
\nabla_\vx f(\vx,t) &= 
-\sum_{i=1}^n \frac{w_i}{\vert 2\pi\widetilde{\Sigma}_i\vert^{\frac{1}{2}}} \exp\left(-\norm{\vx-\bm{\mu}}_{\widetilde{\Sigma}_i^{-1}}^2\right) \widetilde{\Sigma}_i^{-1} (\vx-\vx_i)\\
&= -\sum_{i=1}^n w_i G(\vx_i,\widetilde{\Sigma}_i) \underbrace{\widetilde{\Sigma}_i^{-1} (\vx-\vx_i)}_{\vr_i}
\end{align*}
using $\widetilde{\Sigma}_i = \Sigma_i+t\id$.
Similarly, the Hessian of $f$ is given by
\[
\nabla_\vx^2 f(\vx,t) = -\sum_{i=1}^n w_i G(\vx_i,\widetilde{\Sigma}_i)\left(\widetilde{\Sigma}_i^{-1} -  \vr_i\vr_i^\top\right).
\]
Then the gradient of the energy reads as
\[
\nabla_\vx F(\vx, t) = -\frac{1}{f(\vx,t)} \nabla_\vx f(\vx,t)
\]
and its Hessian is defined as
\[
\nabla_\vx^2 F(\vx, t) =
-\frac{1}{f(\vx,t)} \nabla_\vx^2 f(\vx, t) + \frac{1}{f(\vx,t)^2} \nabla_\vx f(\vx,t) \left(\nabla_\vx f(\vx,t)\right)^\top.
\]
By plugging in the Hessian of $f$, we get
\[
\nabla_\vx^2 F(\vx, t) = \underbrace{\frac{1}{f(\vx,t)}}_{\geq0}\Bigg\{\sum_{i=1}^n \underbrace{w_i G(\vx_i,\widetilde{\Sigma}_i)(\vx)}_{\geq0} \left( \widetilde{\Sigma}_i^{-1} - \vr_i\vr_i^\top  \right) + \underbrace{\frac{1}{f(\vx,t)} \nabla_\vx f(\vx,t) \left(\nabla_\vx f(\vx,t)\right)^\top}_{\succeq 0} \Bigg\}.
\]
For any~$t\in(0,\infty)$, the energy~$F$ is convex if $\nabla_\vx^2 F(\vx, t)\succeq0$ for all~$\vx\in\X$.
Since almost all parts of~$\nabla_\vx^2 F(\vx, t)$ are positive, we only need to ensure that
\[
\widetilde{\Sigma}_i^{-1} - \vr_i\vr_i^\top = \widetilde{\Sigma}_i^{-1} - \widetilde{\Sigma}_i^{-1} (\vx-\vx_i)(\vx-\vx_i)^\top \widetilde{\Sigma}_i^{-1} \succeq 0
\]
for all~$i=1,\ldots,n$.
By multiplying $\widetilde{\Sigma}_i$ from both sides, we get
\[
\widetilde{\Sigma}_i - (\vx-\vx_i)(\vx-\vx_i)^\top \succeq 0 \iff \Sigma_i + t\id \succeq (\vx-\vx_i)(\vx-\vx_i)^\top.
\]
Computing the minimal Eigenvalue on the left-hand-side and the maximal Eigenvalue on the right-hand-side, we obtain
\[
\lambda_\mathrm{min}(\Sigma_i) + t \geq \lambda_\mathrm{max}\left(\norm{\vx-\vx_i}^2 \frac{\vx-\vx_i}{\norm{\vx-\vx_i}}\frac{(\vx-\vx_i)^\top}{\norm{\vx-\vx_i}}\right) = \norm{\vx-\vx_i}^2.
\]
Since $\lambda_\mathrm{min}(\Sigma_i)\geq0$ for any $i=1,\ldots,n$, it is sufficient to show that
\[
t\geq \norm{\vx-\vx_i}^2,
\]
which is the case if 
\[
t\geq \max_{x\in\X}\max_{i=1,\ldots,n} \norm{\vx-\vx_i}^2
\]
holds true.
Note that we can estimate the right-hand-side from above by the domain's diameter~$\diameter(\X)=\sup_{x,y\in\X}\norm{x-y}$.
As a result, we conclude the proof by
\[
t\geq \diameter(\X)^2.
\]
\end{proof}

\begin{corollary}
Theorem~\ref{thm:Fconvex} also holds if an empirical discrete probability measure of a dataset~$\{\vx_i\}_{i=1}^n\subset\X$, i.e.
\[
p = \frac{1}{n}\sum_{i=1}^n\delta_{\vx_i},
\]
is considered.
Here, $\delta_{\vx}$ denotes the Dirac delta measure located at~$\vx$.
\end{corollary}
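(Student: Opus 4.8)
The plan is to deduce Corollary~\ref{cor:empirical} from Theorem~\ref{thm:Fconvex} by viewing the empirical measure as a limiting case of a GMM in which every component has vanishing covariance. Concretely, for a parameter~$\varepsilon>0$ set
\[
p_\varepsilon = \frac{1}{n}\sum_{i=1}^n G(\vx_i,\varepsilon\id),
\]
which is a GMM in the sense of the Definition with weights~$w_i=\tfrac1n\in\Delta^n$, means~$\{\vx_i\}_{i=1}^n\subset\X$, and covariances~$\Sigma_i=\varepsilon\id$. Applying Theorem~\ref{thm:Fconvex} to~$p_\varepsilon$ yields a threshold~$\widetilde t_\varepsilon$ beyond which $F_\varepsilon(\vx,t)\coloneqq -\log\big((p_\varepsilon\ast G(\vec0,t\id))(\vx)\big)$ is convex in~$\vx$; the key point is that the proof of Theorem~\ref{thm:Fconvex} gives the \emph{uniform} bound~$\widetilde t_\varepsilon\le\diameter(\X)^2$, independent of~$\varepsilon$.

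The next step is to pass to the limit~$\varepsilon\to0$. Since $p_\varepsilon\ast G(\vec0,t\id) = \tfrac1n\sum_i G(\vx_i,(\varepsilon+t)\id)$ while the empirical measure satisfies $p\ast G(\vec0,t\id) = \tfrac1n\sum_i G(\vx_i,t\id)$, we have the pointwise (indeed locally uniform, together with all derivatives in~$\vx$) convergence $p_\varepsilon\ast G(\vec0,t\id)\to p\ast G(\vec0,t\id)$ as~$\varepsilon\to0$ for each fixed~$t>0$, because the Gaussian density and its derivatives depend continuously on the covariance parameter. Consequently $F_\varepsilon(\cdot,t)\to F(\cdot,t)$ in~$\C^2$ on compact subsets. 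Fixing any~$t\ge\diameter(\X)^2$, convexity of~$F_\varepsilon(\cdot,t)$ on~$\X$ for every~$\varepsilon>0$ (guaranteed by the uniform threshold) is preserved under this $\C^2$ limit — equivalently, $\nabla_\vx^2 F_\varepsilon(\vx,t)\succeq0$ for all~$\varepsilon$ implies $\nabla_\vx^2 F(\vx,t)\succeq0$ — so $F(\cdot,t)$ is convex. Taking~$\widetilde t=\diameter(\X)^2$ finishes the argument.

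Alternatively, and perhaps more cleanly, one can simply repeat the computation in the proof of Theorem~\ref{thm:Fconvex} verbatim with~$\Sigma_i$ replaced by the zero matrix: the smoothed density is~$f(\vx,t)=\tfrac1n\sum_i G(\vx_i,t\id)(\vx)$, the gradient and Hessian formulas go through with~$\widetilde\Sigma_i=t\id$, and the positive-definiteness reduction leads to the requirement $t\id\succeq(\vx-\vx_i)(\vx-\vx_i)^\top$, i.e.~$t\ge\norm{\vx-\vx_i}^2$, which again holds as soon as~$t\ge\diameter(\X)^2$. I would present the limiting argument as the main line since it makes transparent why no new work is needed, and mention the direct recomputation as a remark.

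The main obstacle — really the only subtlety — is justifying that convexity survives the limit~$\varepsilon\to0$: one must ensure the convergence $F_\varepsilon\to F$ is strong enough (convergence of Hessians, locally uniformly) rather than merely pointwise, and that the convexity threshold from Theorem~\ref{thm:Fconvex} is genuinely uniform in~$\varepsilon$ rather than blowing up as the components collapse. Both are immediate from the explicit bound~$\widetilde t\le\diameter(\X)^2$ in the theorem's proof and from smoothness of the Gaussian in its covariance parameter, so no real difficulty arises; the direct-recomputation route sidesteps even this by never taking a limit at all.
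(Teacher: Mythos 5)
Your proposal is correct, and in fact your ``alternative'' route is exactly the paper's proof: the paper simply observes that $\delta_{\vx_i}\ast G(\vec{0},t\id)=G(\vx_i,t\id)$, so the smoothed empirical density is already a GMM of the form~\eqref{eq:fgmm} with $\Sigma_i=\bm{0}$, and the entire argument of Theorem~\ref{thm:Fconvex} goes through verbatim with $\widetilde{\Sigma}_i=t\id$, landing on the same threshold $t\geq\diameter(\X)^2$. Your preferred main line --- approximating $p$ by $p_\varepsilon=\tfrac1n\sum_i G(\vx_i,\varepsilon\id)$, invoking the theorem with the $\varepsilon$-uniform bound $\widetilde{t}_\varepsilon\leq\diameter(\X)^2$, and passing to the limit via locally $\C^2$ convergence of $F_\varepsilon(\cdot,t)$ --- is also valid, and it has the virtue of using the theorem strictly as a black box (the mollified mixtures genuinely satisfy the positive-definite-covariance hypothesis, whereas $\Sigma_i=\bm{0}$ technically does not). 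But it is heavier than necessary: since convolving a Dirac with $G(\vec{0},t\id)$ \emph{immediately} produces a nondegenerate Gaussian, there is no singular object left to regularize, and the limit argument (with its attendant care about convergence of Hessians and uniformity of the threshold) buys nothing that the one-line direct substitution does not already give. If you keep the limiting argument, do state explicitly that $p_\varepsilon\ast G(\vec{0},t\id)=\tfrac1n\sum_i G(\vx_i,(\varepsilon+t)\id)$ so the $\C^2$ convergence is an elementary consequence of continuity of the Gaussian density in its covariance --- you do say this, so the argument is complete; I would just swap which route is the main line.
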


\begin{proof}
Since the convolution of the empirical probability measure~$p$ with a zero-mean Gaussian results in a GMM due to the translation property of the Dirac delta function, we get $F(\vx,t) = -\log f(\vx,t)$ for
\[
f(\vx,t)=\frac{1}{n}\sum_{i=1}^n G(\vx_i, t\id).
\]
Note that this results is identical to the definition of~$f$ in \eqref{eq:fgmm} if $\Sigma_i=\bm{0}$ for $i=1,\ldots,n$.
Consequently, the proof of this corollary follows the same line of arguments as in Theorem~\ref{thm:Fconvex}.
\end{proof}

\section{Implementation Details} \label{apdx:implementationDetails}
To extract and combine features, we use the following convolution operators.
The first convolution operator~$K_1$ facilitates~$N_1=48$ kernels of size~$7\times 7$, which are initialized by the 2D discrete cosine transform basis filers as in~\citet{ChPo16,KoKl17}.
All subsequent operators~$K_i$, $i=2,\ldots,L$ implement 2-dimensional convolutions using~$N_i=48$ kernels of size~$3\times 3$.
Those filters are initialized using ``Kaiming''-normal initialization.
Neither of the convolution operators facilitates bias terms since each subsequent non-linear parametric function may adapt to its input features.

Throughout the prior models, every non-linear function is implemented using weighted spline basis functions.
In addition, each non-linearity is a function in two variables -- an input feature~$x\in\R$ and the (logarithmic) smoothing parameter~$\hatt\in[\tminh,\tmaxh]$.
For the $i^\text{th}$ layer and the $j^\text{th}$ feature channel, the output of the corresponding activation function~$\phi_{ij}$ is computed as
\[
\phi_{ij}(x,\hatt) = \sum_{l=1}^{N_x}\sum_{o=1}^{N_t} w_{ij}^{lo} \varphi\left(\frac{x-\mu_x^l}{\gamma_x}\right) \varphi\left(\frac{\hatt-\mu_x^o}{\gamma_t}\right),
\]
where $N_x,N_t\in\N$ define the number of basis functions and the weights~$w_{ij}^{lo}\in\R$ are learned during optimization.
For all functions, we place the means~$\mu_x^l$ on an equidistant grid on~$[-3.5,3.5]$ and set~$\gamma_x=\frac{7}{N_x-1}$.
Likewise, the means~$\mu_t^o$ are equally distributed on the interval~$[\tminh,\tmaxh]$ and $\gamma_t=\frac{\tmaxh-\tminh}{N_t-1}$.
The kernel~$\varphi\colon\R\to\R$ is given by the quartic spline, i.e.,
\[
\varphi(x)=\tfrac{1}{24}
\begin{cases}
11 + 12(\vert x\vert+\frac12) - 6(\vert x\vert+\frac12)^2 - 12(\vert x\vert+\frac12)^3 + 6(\vert x\vert+\frac12)^4 &\text{if } 0\leq\vert x\vert< \frac12\\
1 + 4(\frac32-\vert x\vert) + 6(\frac32-\vert x\vert)^2 + 4(\frac32-\vert x\vert)^3 - 4(\frac32-\vert x\vert)^4 &\text{if } \frac12\leq \vert x\vert< \frac32\\
(\frac52-\vert x\vert)^4 &\text{if } \frac32\leq \vert x\vert<\frac52\\
0 &\text{else}
\end{cases}.
\]
These activation functions are computationally much more expensive than simple ReLU-activations.
However, the spline-based activation functions are much more expressive and several implementation tricks can be used to compute these activation functions efficiently.
The non-linear functions of the intermediate layers~$\phi_i,\ i=1,\ldots,L-1$ are initialized to the identity function, while the last non-linear function~$\phi_L$ is initialized to the quadratic function.

As a source of natural image patches, we consider the BSDS500 dataset~\cite{MaFo01} and randomly extract~$96\times 96$ image patches to define the empirical distribution~$\dist{X}$.
In the implementation, we sample the smoothing parameter from a uniform distribution, i.e., $\hatt\sim\mathcal{U}(\tminh,\tmaxh)$.
To minimize the training loss~\eqref{eq:losslog} with the scaling factor~$m_t=\frac{1}{d}$, the AdaBelief~\citep{ZhTa20} optimizer is used for $100\ 000$ iterations using a mini-batch size of~$128$ along with an initial learning rate of $10^{-3}$.
The learning rate is annealed to~$5\cdot 10^{-5}$ using a cosine scheme.
We utilize the AdaBelief optimizer since it performs preconditioning based on local curvature information.
Thus, it is well suited to learn parameters that lie in different intervals.


\end{document}